\title{Differential Assessment of Black-Box AI Agents
}
\author{
    Rashmeet Kaur Nayyar\equalcontrib,
    Pulkit Verma\equalcontrib,
    {\normalfont and}
    Siddharth Srivastava
}
\newcommand{\cmark}{\ding{51}}%
\newcommand{\xmark}{\ding{55}}
\theoremstyle{definition} \newtheorem{theorem}{Theorem}
\newtheorem{lemma}[theorem]{Lemma}
\newtheorem{definition}{Definition}
\newcommand{\tuple}[1]{\langle#1 \rangle}
\renewcommand{\a}{\mathcal{A}}
\newcommand{\ag}{\mathcal{A}}
\newcommand{\m}{\mathcal{M}}
\newcommand{\mi}{M_{\emph{init}}^\ag}
\newcommand{\md}{M_{\emph{drift}}^\ag}
\newcommand{\mdset}{\mathcal{M}^{\ag}_{\emph{drift}}}
\newcommand{\ai}{\mathcal{A}_{\emph{init}}}
\newcommand{\ad}{\mathcal{A}_{\emph{drift}}}
\newcommand{\unkwn}{\raisebox{.5pt}{\textcircled{\raisebox{-.9pt} {?}}}}
\newcommand{\abs}{\emptyset}
\newcommand{\mytimes}{ \tikz[baseline=-.55ex] \node [inner sep=0pt,cross out,draw,line width=1pt,minimum size=1ex] (a) {};}
\definecolor{aia-x-color}{HTML}{990000}
\begin{document}

\maketitle

\begin{abstract}

Much of the research on learning symbolic models of AI agents
focuses on agents with stationary models. This assumption fails
to hold in settings where the agent’s capabilities may change 
as a result of learning, adaptation, or other post-deployment 
modifications. Efficient assessment of agents in such settings 
is critical for learning the true capabilities of an AI system 
and for ensuring its safe usage. In this work, we propose a 
novel approach to $\emph{differentially}$ assess black-box AI 
agents that have drifted from their previously known models. As a 
starting point, we consider the fully observable and deterministic
setting. We leverage sparse observations of the drifted agent’s 
current behavior and knowledge of its initial model to generate
an active querying policy that selectively queries the agent 
and computes an updated model of its functionality. Empirical 
evaluation shows that our approach is much more efficient than 
re-learning the agent model from scratch. We also show that the
cost of differential assessment using our method is 
proportional to the amount of drift in the agent’s 
functionality.
\end{abstract}

\section{Introduction} 
\label{sec:introduction}

With increasingly greater autonomy in AI systems in recent
years, a major problem still persists and has largely been 
overlooked: how do we accurately predict the behavior of a 
black-box AI agent that is evolving and adapting to changes in 
the environment it is operating in? And how do we ensure its 
reliable and safe usage? Numerous factors could cause 
unpredictable changes in agent behaviors: sensors and actuators may
fail due to physical damage, the agent may adapt to a dynamic 
environment, users may change deployment and use-case scenarios, etc. 
Most prior work on the topic presumes that the functionalities and 
the capabilities of AI agents are static, while some works start 
with a \emph{tabula-rasa} and learn the entire model from 
scratch. However, in many real-world scenarios, the agent model
is transient and only parts of its functionality change 
at a time. 

\citet{Bryce2016} address a related problem 
where the system learns the updated mental model of a user 
using particle filtering given prior knowledge about the user's
mental model. However, they assume that the entity being modeled can
tell the learning system about flaws in the learned model if needed. 
This assumption does not hold in settings where the entity being modeled
is a black-box AI system:  most such systems are either implemented using
inscrutable representations or otherwise lack the ability to automatically
generate a model of their functionality (what they can do and when) in
terms the user can understand. The problem of efficiently assessing, in
human-interpretable terms, the functionality of such a non-stationary AI
system has received little research attention.

\begin{figure}[t]
\centering
\includegraphics[width=\linewidth]{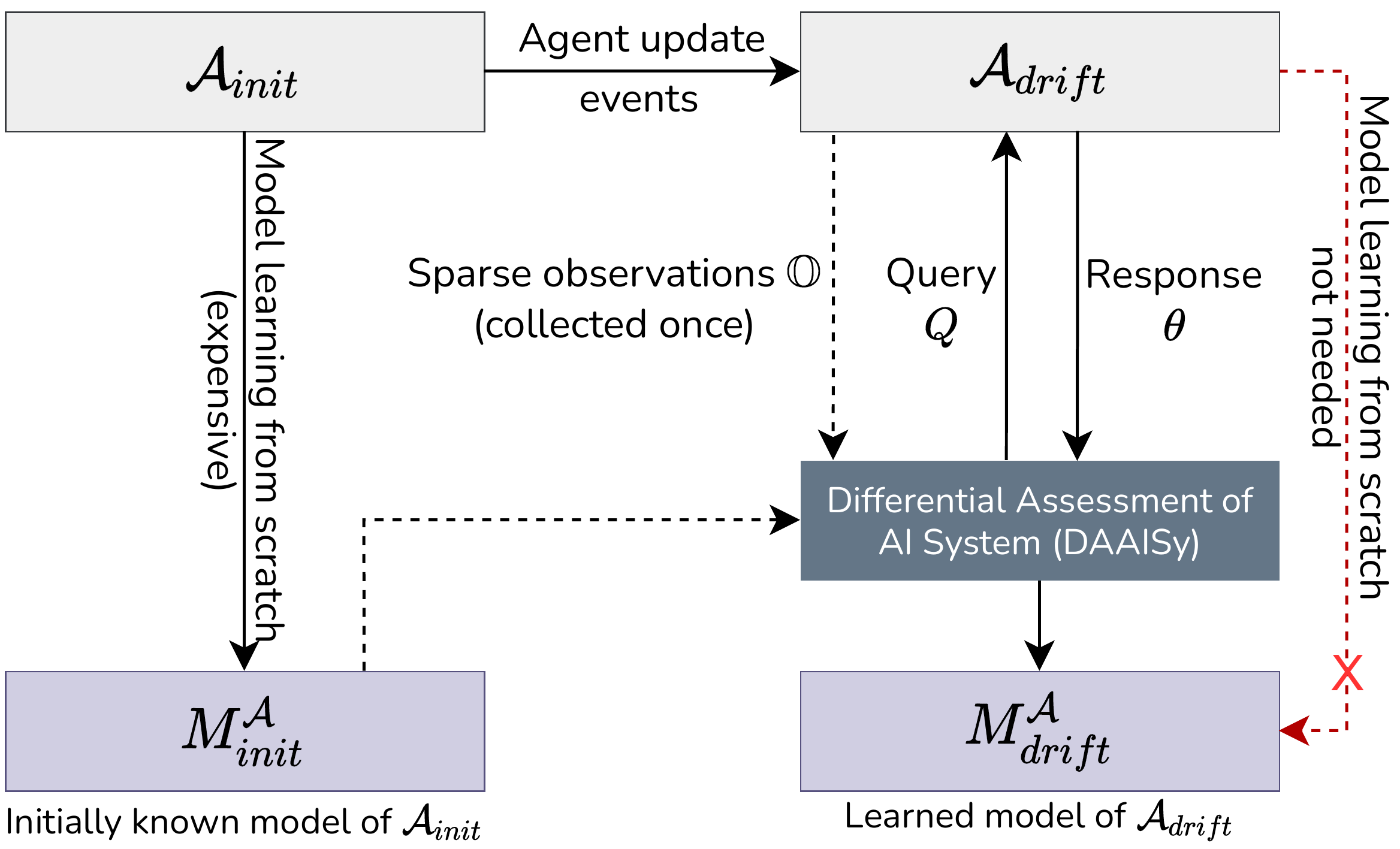}
\caption{The Differential Assessment
of AI System (DAAISy) takes as input the initially known model of the agent 
prior to model drift, available observations of the updated agent's behavior, and performs a selective dialog with the black-box AI
agent to output its updated model through efficient model learning. }
\label{fig:blockdiagram}
\end{figure}

The primary contribution of this paper is an algorithm for \emph{differential assessment}  of black-box AI systems (Fig.~\ref{fig:blockdiagram}). This algorithm utilizes an initially known interpretable model of the agent as it was in the past, and a small set of observations of agent execution. It uses these observations to develop an incremental querying strategy that avoids the full cost of assessment from scratch and outputs a revised model of the agent's new functionality. 
One of the challenges in learning agent models from observational data is that reductions in agent functionality often do not correspond to specific ``evidence'' in behavioral observations, as the agent may not visit states where certain useful actions are no longer applicable. Our analysis shows that if the agent can be placed in an ``optimal'' planning mode, differential assessment can indeed be used to query the agent and recover information about reduction in functionality. 
This ``optimal'' planning mode is not necessarily needed for learning about increase in functionality.
Empirical evaluations on a range of problems clearly demonstrate that our method is much 
more efficient than re-learning the agent's model from scratch. 
They also exhibit the desirable property that the computational cost of differential assessment is
proportional to the amount of drift in the agent's functionality. 

\subsubsection{Running Example} 
Consider a battery-powered rover with limited storage capacity
that collects soil samples and takes pictures. Assume that its planning model 
is similar to IPC domain Rovers~\cite{long_2003_ipc}.
It has an action that collects a rock sample at a waypoint and stores it in a storage iff 
it has at least half of the battery capacity remaining.
Suppose there was an update to the rover's system and as a result of this update, the rover
can now collect the rock sample only when its battery is full,
as opposed to at least half-charged battery that it needed before. 
Mission planners familiar with the earlier system and unaware about the exact updates in the functionality of the rover would struggle to collect sufficient samples. This could jeopardise multiple
missions if it is not detected in time.

This example illustrates how our system could be of value by differentially detecting such a drift in the functionality of a black-box AI system and deriving its true functionality.

The rest of this paper is organized as follows: The next section presents background terminology. This is followed by a formalization of the differential model assessment problem in Section 3. Section 4 presents our approach for differential assessment by first identifying aspects of the agent's functionality that may be affected (Section 4.1) followed by the process for selectively querying the agent using a primitive set of queries. We present empirical evaluation of the efficiency of our approach on randomly generated benchmark planning domains in Section 5. Finally, we discuss relevant related work in  Section 6 and conclude in Section 7.

\section{Preliminaries}
\label{sec:background}

We consider models that express an agent's functionalities in the form of STRIPS-like planning models~\cite{Fikes1971,McDermott_1998_PDDL,fox03_pddl} as defined below.

\begin{definition}
A planning domain model is a tuple $M = \langle P, A \rangle$, where $P = \{p_{1}^{r_1},\dots, p_{n}^{r_n} \}$ is a finite set of predicates with arities $r_i$, $i \in [1,n]$; and $A = \{a_1,\dots, a_k \}$ is a finite set of parameterized relational actions. Each action $a_i \in A$ is represented as a tuple $\langle header(a_{i}), \emph{pre}(a_{i}), \emph{eff}(a_i) \rangle $, where $header(a_{i})$ represents the action header consisting of the name and parameters for the action $a_i$, $pre(a_i)$ represents the conjunction of positive or negative literals that must be true in a state where the action $a_i$ is applicable, and $\emph{eff}(a_i)$ is the conjunction of positive or negative literals that become true as a result of execution of the action $a_i$. 
\end{definition}

In the rest of the paper, we use the term ``model'' to refer to planning domain models and use closed-world assumption as used in the Planning Domain Definition Language (PDDL)~\cite{McDermott_1998_PDDL}.
Given a model $M$ and a set of objects $O$, let $S_{M,O}$ be the space of all states defined as maximally consistent sets of  literals over the predicate vocabulary of $M$ with $O$ as the set of objects. We omit the subscript when it is clear from context.
An action $a \in A$ is applicable in a state $s \in S$ if $s \models
\emph{pre}(a)$. The result of executing $a$ is a state $a(s)=s' \in S$ such that $s'\models \emph{eff}(a)$, and all atoms not in $\emph{eff}(a)$  have literal forms as in $s$.

A literal corresponding to a predicate $p \in P$ can appear in
$\emph{pre}(a)$ or $\emph{eff}(a)$ of an action $a \in A$ if and only if 
it can be instantiated using a subset of parameters of $a$. 
E.g., consider an action \emph{navigate (?rover ?src ?dest)} and a predicate
\emph{(can\_traverse ?rover ?x ?y)} in the Rovers domain discussed earlier. Suppose a literal
corresponding to predicate \emph{(can\_traverse ?rover ?x ?y)} can appear in the precondition 
and/or the effect of \emph{navigate (?rover ?src ?dest)} action. 
Assuming we know \emph{?x} and \emph{?y} in \emph{can\_traverse}, and  \emph{?src} and \emph{?dest} in \emph{navigate} are of the same type 
\emph{waypoint}, the possible lifted instantiations of 
predicate \emph{can\_traverse} compatible with action \emph{navigate} 
are \emph{(can\_traverse ?rover ?src ?dest)}, 
\emph{(can\_traverse ?rover ?dest ?src)}, 
\emph{(can\_traverse ?rover ?src ?src)}, and 
\emph{(can\_traverse ?rover ?dest ?dest)}.
The number of parameters in a predicate $p \in P$ that is relevant to an 
action $a \in A$, i.e., instantiated using a subset of parameters of the 
action $a$, is bounded by the maximum arity 
of the action $a$. We formalize this notion of lifted instantiations of 
a predicate with an action as follows:

\begin{definition}
Given a finite set of predicates $P = \{p_{1}^{r_1},\dots, p_{n}^{r_n} \}$ with 
arities $r_i$, $i \in [1,n]$; and
a finite set of parameterized relational actions $A = \{a_1^{\psi_1},\dots, a_k^{\psi_k} \}$
with arities $\psi_j$ and parameters 
$\emph{par}(a_{j}^{\psi_j}) = \langle \alpha_1,\dots,\alpha_{\psi_j}\rangle$, $j \in [1,k]$,
the set of \emph{lifted instantiations of predicates} $P^*$ is defined as the collection 
$ \{p_i(\sigma(x_1),\dots,\sigma(x_{r_i})) \,| p_i \in P, a \in A, \sigma:\{x_1,\dots,x_{r_i} \}\rightarrow \emph{par}(a)\}$.
\end{definition}

\subsection{Representing Models} 
We represent a model $M$ using the set of all possible \emph{pal-tuples} $\Gamma_M$ of 
the form $\gamma = \langle p, a, \ell \rangle$, where $a$ is a parameterized action header for an action in $A$, $p \in P^*$ is a possible lifted instantiation of a predicate in $P$,
and $\ell \in$ \{\emph{pre}, \emph{eff}\} denotes a location in $a$, precondition or effect, where $p$ can appear. A model $M$ is
thus a function $\mu_M : \Gamma_M \rightarrow \{+, - , \abs \}$ that maps each element in $\Gamma_M$ to a \emph{mode} in the set $\{+, - , \abs \}$. The assigned mode for a \emph{pal-tuple} $\gamma \in \Gamma_M$ denotes whether  $p$ is present as a positive literal ($+$), as a negative literal ($-$), or absent ($\emptyset$) in the precondition ($\ell = $\emph{pre}) or effect ($\ell = $ \emph{eff}) of the action header $a$. 

This formulation of models as \emph{pal-tuples}
allows us to view the modes 
for any predicate in an action's precondition and effect 
independently. However, 
at times it is useful to consider a model at a granularity of
relationship between a predicate and an action.
We address this by representing a model $M$ as a set of
\emph{pa-tuples} $\Lambda_M$ of the form $\langle p, a\rangle$ where $a$ is a parameterized action header for an action in $A$, and $p \in P^*$ is a possible lifted instantiation of a predicate in $P$. Each \emph{pa-tuple}
can take a value of the form $\langle m_{\emph{pre}}, m_{\emph{eff}} \rangle$,
where $m_{\emph{pre}}$ and $m_{\emph{eff}}$ represents the mode in which
$p$ appears in the precondition and effect of $a$, respectively. 
Since a predicate cannot appear as a positive (or negative) literal in
both the precondition and effect of an action, $\langle +,+ \rangle$
and $\langle -, - \rangle$ are not in the range of values that 
\emph{pa-tuples} can take. Henceforth, in the context of a \emph{pal-tuple} or a \emph{pa-tuple}, we refer to $a$ as an action instead of an action header.

\subsubsection{Measure of model difference} 
Given two models $M_1 = \langle 
P, A_1 \rangle$ and $M_2 = \langle P, A_2 \rangle$, defined over the same sets of predicates $P$ and action headers $A$, 
the difference between the two models $\Delta (M_1,M_2)$ is defined as the number of 
\emph{pal-tuples} that differ in their modes in $M_1$ and $M_2$, i.e., $\Delta (M_1,M_2) = |\{ \gamma \in P\times A \times \{ +, - ,\emptyset\} | \mu_{M_1}({\gamma}) \neq \mu_{M_2}({\gamma}) \} |$.

\subsection{Abstracting Models} Several authors have explored the use of abstraction in planning~\cite{Sacerdoti74,Giunchiglia92,Helmert07,Backstrom13,Srivastava16}. 
We define an abstract model as a model that does not have a mode assigned
for at least one of the \emph{pal-tuples}.
Let $\Gamma_M$ be the set of all
possible \emph{pal-tuples}, and $\unkwn$ be an additional possible value that a \emph{pal-tuple} can take. Assigning $\unkwn$ mode to a \emph{pal-tuple} denotes that its mode is unknown. An abstract model $M$ is
thus a function $\mu_M : \Gamma_M \rightarrow \{+, - , \abs, \unkwn \}$ that maps each element in $\Gamma_M$ to a \emph{mode} in the set $\{+, - , \abs, \unkwn \}$.
Let $\mathcal{U}$ be the set of all 
abstract and concrete models that can possibly be expressed by assigning modes in $\{+,-,\abs,\unkwn\}$ to each \emph{pal-tuple} $\gamma \in \Gamma_M$. We now formally define model abstraction as follows:

\begin{definition}
\label{def:abstraction}
Given models $M_1$ and $M_2$, $M_2$ is an abstraction of $M_1$ over the set of all possible \emph{pal-tuples} $\Gamma$ iff $\exists \Gamma_2 \subseteq \Gamma $ s.t. $\forall \gamma \in \Gamma_2$, $\mu_{M_2}(\gamma)=\unkwn $
and $\forall \gamma \in \Gamma \setminus \Gamma_2$, $\mu_{M_2}(\gamma) = \mu_{M_1}(\gamma)$.
\end{definition}

\subsection{Agent Observation Traces} We assume limited access to 
a set of observation traces $\mathbb{O}$,
collected from the agent, as defined below.
\begin{definition}
An \emph{observation trace} $o$ is a sequence of states and actions
of the form $\langle s_0, a_1, s_1, a_2, \dots,
s_{n-1}, a_n, s_n \rangle$ such that $\forall i\in [1,n]  \;$
 $a_i(s_{i-1}) = s_i$.
\end{definition}

These observation traces can be split into multiple action triplets as defined below.
\begin{definition}
Given an observation trace $o = \langle s_0, a_1, s_1, a_2, \dots,
s_{n-1}, a_n, s_n \rangle$, an \emph{action triplet} is a 3-tuple sub-sequence of $o$ of the form
$\langle s_{i-1}, a_i, s_{i} \rangle$, where $i\in [1,n]$ and applying 
an action $a_i$ in state $s_{i-1}$
results in state $s_{i}$, i.e., $a_i(s_{i-1}) = s_{i}$. 
The states $s_{i-1}$ and $s_{i}$ are called pre- and post-states of action $a_i$, respectively.
\end{definition}

An action triplet $\langle s_{i-1}, a_{i}, s_{i} \rangle$ is said to be \emph{optimal} if there does not exist an action sequence (of length $\geq 1$) that takes the agent
from state $s_{i-1}$ to $s_{i}$
with total action cost less than that of action $a_{i}$, where each action $a_{i}$ has unit cost.

\subsection{Queries}
\label{sec:query}

We use queries to actively gain information about the functionality of an agent to learn its updated model. We assume that the agent can respond to a query using a simulator. The availability of such agents with simulators is a common assumption as
most AI systems already use simulators for design, testing, and verification.

We use a notion of queries
similar to \citet{verma2021asking}, to perform a dialog with an autonomous agent. These queries use an agent to determine what happens if it executes a sequence of actions in a given initial state. 
E.g., in the rovers domain,
the rover could be asked: what happens when the action \emph{sample\_rock(rover1 storage1 waypoint1)} is executed in
an initial state $\{$\emph{(equipped\_rock\_analysis rover1), (battery\_half rover1), (at rover1 waypoint1)}$\}$?

Formally, a \emph{query} is a function that maps an agent to a response, which we define as:

\begin{definition}
\label{def:uquery}
Given a set of predicates $P$, a set of actions $A$, and a set of objects $O$, a \emph{query}
$Q \langle s, \pi \rangle: \mathcal{A} \rightarrow 
\mathbb{N} \times S$  is parameterized by a start state $s_I \in S$
and a plan $\pi = \langle a_1,\dots,a_N \rangle$, where 
$S$ is the state space over $P$ and $O$, and 
$\{ a_1,\dots,a_N \}$ is a subset of action space over $A$ and $O$.
It maps agents to responses $\theta= \langle n_F, s_F \rangle$ such that
$n_F$ is the length of the longest prefix of  $\pi$ that $\mathcal{A}$ can
successfully execute and $s_F \in S$ is the result of that execution.
\end{definition}

Responses to such queries can be used to gain useful information
about the model drift. 
E.g., consider an agent with an internal model $\md$ as shown in Tab.~\ref{tab:turn_to}.
If a query is posed asking what
happens when the action \emph{sample\_rock(rover1 storage1 waypoint1)} is executed in
an initial state $\{$\emph{(equipped\_rock\_analysis rover1), (battery\_half rover1), (at rover1 waypoint1)}$\}$, the agent would respond $\langle 0, \{$\emph{(equipped\_rock\_analysis rover1), (battery\_half rover1), (at rover1 waypoint1)}$\} \rangle$, representing that it was not able to execute the plan, and the resulting state was $\{$\emph{(equipped\_rock\_analysis rover1), (battery\_half rover1), (at rover1 waypoint1)}$\}$ (same as the initial state in this case). Note that this response is inconsistent with the model $\mi$, and it can help in identifying that the precondition of action \emph{sample\_rock(?r ?s ?w)} has changed.

\section{Formal Framework}
\label{sec:formulation}

Our objective is to address the problem of differential assessment of black-box AI agents whose functionality may have changed from the last known model. Without loss of generality, we consider situations where the set of action headers is same because the problem of differential assessment with changing action headers can be reduced to that with uniform action headers. This is because if the set of actions has increased, new actions can be added with empty preconditions and effects to $\mi$, and if it has decreased, $\mi$ can be reduced similarly. We assume that the predicate vocabulary used in the two models is the same; extension to situations where the vocabulary changes can be used to model open-world scenarios. However, that extension is beyond the scope of this paper.

\begin{table}[t]
    \footnotesize
    \centering
    \begin{tabular}{p{0.12\columnwidth}@{}p{0.435\columnwidth}@{}p{0.025\columnwidth}@{} >{\columncolor{white}[\tabcolsep][0pt]}p{0.38\columnwidth}@{}}
    \toprule
    Model & {$\,$Precondition} & & Effect\\
    \midrule
        $M^\ag_{\emph{init}}$ & \emph{(equipped\_rock\_analysis\!\,?r)} $\,\,\,\,\,\,\,\,\,\,$
        \emph{(battery\_half\! ?r)} $\,\,\,\,\,\,\,\,\,\,\,\,\,\,\,\,\,\,\,\,\,\,\,$
        \emph{(at ?r ?w)} & 
        $\rightarrow$ & 
        \emph{(rock\_sample\_taken\!\,?r)} $\,\,\,$
        \emph{(store\_full ?r ?s)} $\,\,\,\,\,\,\,\,\,\,\,\,\,\,\,\,\,\,\,$
        $\neg$\emph{(battery\_half\! \!\!?r)} $\,\,\,\,\,\,\,\,\,$
        \emph{(battery\_reserve ?r)}
        \\
        \midrule
        $M^\ag_{\emph{drift}}$ & \emph{(equipped\_rock\_analysis\!\,?r)} $\,\,\,\,\,\,\,\,\,\,\,\,\,\,\,\,\,\,\,\,\,\,\,\,\,\,$
        \emph{(battery\_full\! ?r)} $\,\,\,\,\,\,\,\,\,\,\,\,\,\,\,\,\,\,\,\,\,\,\,\,$
        \emph{(at ?r ?w)} & 
        $\rightarrow$ & 
        \emph{(rock\_sample\_taken\! ?r)} $\,\,\,$
        \emph{(store\_full ?r ?s)} $\,\,\,\,\,\,\,\,\,\,\,\,\,\,\,\,\,\,\,$
        $\neg$\emph{(battery\_full ?r)} $\,\,\,\,\,\,\,\,\,$
        \emph{(battery\_half ?r)}\\
        \bottomrule
    \end{tabular}
    \caption{\normalfont \emph{sample\_rock (?r ?s ?w)} action of the agent $\ag$ in $M^\ag_{\emph{init}}$ and a possible drifted model $M^\ag_{\emph{drift}}$.}
    \label{tab:turn_to}
\end{table}

Suppose an agent $\ag$'s functionality was known as a
model $\mi = \langle P, \ai \rangle$, and
we wish to assess its current functionality as the model
$\md =  \langle P, \ad \rangle$. 
The drift in the functionality of the agent can be measured
by changes in
the preconditions and/or effects of all the actions in $\ai$.
The extent of the drift between $\mi$ and $\md$ is represented as the model difference 
$\Delta(\mi,\md)$.

We formally define the problem of differential assessment of an AI agent below.

\begin{definition}
Given an agent $\ag$ with a functionality model $\mi$, and a set of
observations $\mathbb{O}$ collected using its 
current version of $\ad$ with unknown functionality $\md$, the
\emph{differential model assessment} problem
$\langle \mi, \md, \mathbb{O}, \ag \rangle$ is defined as the problem of
inferring $\ag$
in form of $\md$ using the inputs
$\mi$, $\mathbb{O}$, and $\ag$. 

\end{definition}
We wish to develop solutions to the problem of differential assessment of AI
agents that are more efficient than re-assessment from scratch.

\subsection{Correctness of Assessed Model}
We now discuss the properties that a model, which is a solution to the differential model assessment problem, should satisfy. A critical property of such models is that they should be consistent with the observation traces.
We formally define consistency of a model w.r.t. an observation trace as follows:

\begin{definition}
Let $o$ be an observation trace $\langle s_0, a_1, s_1, a_2, \dots,
s_{n-1}, a_n, s_n \rangle$.
A model $M = \langle P, A \rangle$ is \emph{consistent with the observation trace} $o$ iff
$\forall i \in \{1,..,n\}$ $\exists a \in A$ and $a_i$ is a grounding of action $a$ $\; s.t.\quad s_{i-1} \models \emph{pre}(a_i) \; \land \; \forall\, l \in \emph{eff}(a_i) \; s_i \models l$.
\end{definition}

In addition to being consistent with observation traces, a model should also be consistent with the queries that are asked and the responses that are received while actively inferring the model of the agent's new functionality.
We formally define consistency of a model with respect to a query and a response as:

\begin{definition}
Let $M = \langle P, A \rangle$ be a model; $O$ be a set of objects;  $Q =\langle s_I, \pi= \langle a_1,\dots a_n \rangle \rangle$ be a query defined using $P, A,$ and $O$, and let 
$\theta =\langle n_F, s_F\rangle$, ($n_F\le n$)  be a response to $Q$. $M$ is \emph{consistent with the query-response} $\langle Q, \theta \rangle$ iff there exists an observation trace $\langle s_I, a_1, s_1, \ldots, a_{n_F}, s_{n_F} \rangle$ that $M$ is consistent with and $s_{n_F}\not\models \emph{pre}(a_{n_F+1})$ where $\emph{pre}(a_{n_F+1})$ is the precondition of $a_{n_F+1}$ in $M$. 
\end{definition}

We now discuss our methodology for solving the problem of differential assessment of AI systems.

\section{Differential Assessment of AI Systems} 
\label{sec:approach}

\textbf{D}ifferential \textbf{A}ssessment of \textbf{AI} \textbf{Sy}stems (Alg.~\ref{alg:alg3}) -{}- DAAISy -{}-
takes as input an agent $\ag$ whose functionality has drifted, the model
$\mi = \langle P, A \rangle$ representing the previously known functionality of $\ag$, a set of arbitrary observation traces $\mathbb{O}$, and a set of random states $\mathcal{S} \subseteq S$. Alg.~\ref{alg:alg3} returns a set of updated models $\mdset$, where each model $\md \in \mdset$ represents
$\ag$'s updated functionality
and is consistent
with all observation traces $o \in \mathbb{O}$.

A major contribution of this work is to introduce an approach to
make inferences about not just the expanded functionality of an agent but also its reduced functionality using a limited set of observation traces.
Situations where the scope of applicability of an action reduces, i.e., the agent can no longer use an action $a$ to reach state $s'$ from state $s$ while it could before (e.g., due to addition of a precondition literal), are particularly difficult to identify because observing its behavior does not readily reveal what it cannot do in a given state. Most observation based action-model learners, even when given access
to an incomplete model to start with, fail to make inferences about reduced functionality. DAAISy uses two principles to identify such a functionality reduction. First, it uses active querying so that the agent can be made to reveal failure of reachability, and second, we show that if the agent can be placed in optimal planning mode, plan length differences can be used to infer a reduction in functionality.

DAAISy performs two major functions;
it first identifies a salient set of \emph{pal-tuples} whose modes were likely affected (line 1 of Alg.~\ref{alg:alg3}), and then
infers the mode of such affected \emph{pal-tuples} accurately through focused dialog with the agent (line 2 onwards of Alg.~\ref{alg:alg3}). 
In Sec.~\ref{sec:affected_pals}, we present our method for identifying a salient set of potentially affected \emph{pal-tuples} that contribute towards expansion in the functionality of the agent through inference from available arbitrary observations. We then discuss the problem of identification of \emph{pal-tuples} that contribute towards reduction in the functionality of the agent and argue that it cannot be performed using successful executions in observations of satisficing behavior.
We show that \emph{pal-tuples} corresponding to reduced functionality can be identified if observations of optimal behavior of the agent are available (Sec.~\ref{sec:affected_pals}). Finally, we present how we infer the nature of changes in all affected \emph{pal-tuples} through a query-based interaction with the agent (Sec.~\ref{sec:dmaa}) by building upon the Agent Interrogation Algorithm (AIA)~\cite{verma2021asking}. Identifying affected \emph{pal-tuples} helps reduce the computational cost of querying as opposed to the exhaustive querying strategy used by AIA.
We now discuss the two major functions of Alg.\,\ref{alg:alg3} in detail.

\begin{algorithm}[t]
	 \caption{Differential Assessment of AI Systems}
		\textbf{Input:}$M^{\a}_{\emph{init}}$, $\mathbb{O}$, $\a$, $\mathcal{S}$ \\
		\textbf{Output:} $\mathcal{M}^{\ag}_{\emph{drift}}$
		\begin{algorithmic}[1]
		\STATE $\Gamma_\delta \gets \emph{identify\_affected\_pals()}$ \;
		\STATE $M_{\emph{abs}} \!\gets\!$ set \emph{pal-tuples} in $\mi$ corresponding to $\Gamma_\delta$ to $\unkwn$\;
		\STATE $\mathcal{M}^\ag_{\emph{drift}}$  $ \gets \{M_{\emph{abs}}\}$\;
	
		\FOR{each $\gamma$ in $\Gamma_\delta$}
			\FOR{each $M_\emph{abs}$ in $\mathcal{M}^{\ag}_\emph{drift}$}
				\STATE $\mathcal{M}_\emph{abs} \gets M_\emph{abs} \times \{\gamma^+, \gamma^-, \gamma^\emptyset\}$\; 
				\STATE $\mathcal{M}_\emph{sieved} \gets \{\}$
						\IF{action corresponding to $\gamma$: $\gamma_a$ in $\mathbb{O}$}
						\STATE $s_\emph{pre} \gets \emph{states\_where\_}\gamma_a\emph{\_applicable}(\mathbb{O},\gamma_a)$\;
						\STATE $Q \gets \langle s_\emph{pre} \setminus \{\gamma_p \cup \neg \gamma_p$\}, $\gamma_a$ $\rangle$\;
						\STATE $\theta \gets \emph{ask\_query}(\a, Q)$ \;
						\STATE $\mathcal{M}_\emph{sieved} \gets \emph{sieve\_models}(\mathcal{M}_\emph{abs}, Q, \theta)$\;
						\ELSE
						\FOR{each pair $\langle M_i, M_j \rangle$ in $\mathcal{M}_\emph{abs}$}
							\STATE $Q \gets \emph{generate\_query}(M_i, M_j, \gamma, S)$\;
							\STATE $\theta \gets \emph{ask\_query}(\ag, Q)$\;
							\STATE $\mathcal{M}_\emph{sieved} \gets \emph{sieve\_models}(\{ M_i, M_j \}, Q, \theta)$\;
						\ENDFOR
						\ENDIF
				\STATE $\mathcal{M}_\emph{abs} \gets \mathcal{M}_\emph{abs} \setminus$ $\mathcal{M}_\emph{sieved}$
			  \ENDFOR
			  \STATE $\mathcal{M}^\ag_\emph{drift}$ $ \gets \mathcal{M}_\emph{abs}$\;
		\ENDFOR
		 \end{algorithmic}
		\label{alg:alg3}
	\end{algorithm}

\subsection{Identifying Potentially Affected pal-tuples}
\label{sec:affected_pals}

We identify a reduced set of \emph{pal-tuples} whose modes were potentially affected during the model drift, denoted by $\Gamma_\delta$, using a small set of available observation traces $\mathbb{O}$.
We draw two kinds of inferences from these observation traces: inferences about expanded functionality,
and inferences about reduced functionality. We discuss our method for inferring $\Gamma_\delta$ for both types of changes in the functionality below.

\subsubsection{Expanded functionality} To infer expanded functionality of the agent,
we use the previously known model of the agent's functionality and identify its differences with the possible behaviors of the agent that are consistent with $\mathbb{O}$.
To identify the \emph{pal-tuples} that directly contribute to an expansion in the agent's functionality, we perform an analysis similar to ~\citet{Stern2017}, but instead
of bounding the predicates that can appear in each action's precondition and effect, we bound the range of possible values that each \emph{pa-tuple} in $\md$ can
take using Tab.~\ref{tab:t2}. For any \emph{pa-tuple}, a direct comparison between its value in $\mi$ and possible inferred values in $\md$ provides an indication of whether it was affected.

	
\begin{table}[t]
	\begin{tabular}{c@{}c@{}c@{}c@{}c} 
	\toprule
	$\langle m_{\emph{pre}}, m_{\emph{eff}} \rangle$ & $\,\,\emph{(pos,pos)}\,\,$ & $\emph{(pos,neg)}$ & $\,\,\emph{(neg,pos)}\,\,$ & $\,\,\emph{(neg,neg)}$ \\
	\midrule
	$\langle +,- \rangle$ & \xmark & \cmark & \xmark & \xmark \\ 
	$\langle +,\,\emptyset\, \rangle$ & \cmark & \xmark & \xmark & \xmark \\
	\midrule
	$\langle -,+ \rangle$ & \xmark & \xmark & \cmark & \xmark \\ 
	$\langle -,\,\emptyset\, \rangle$ & \xmark & \xmark & \xmark & \cmark \\
	\midrule
	$\langle \,\emptyset\,, + \rangle$ & \cmark & \xmark & \cmark & \xmark \\
	$\langle \,\emptyset\,, - \rangle$ & \xmark & \cmark & \xmark & \cmark \\
	$\langle \,\emptyset\,, \,\emptyset\, \rangle$ & \cmark & \xmark & \xmark & \cmark \\
	\bottomrule
	\end{tabular}
	\centering
	\caption{\label{tab:t2} Each row represents a possible value $\langle m_{\emph{pre}}, m_{\emph{eff}} \rangle$ for a \emph{pa-tuple} $\langle p, a \rangle$. Each column represents a possible tuple representing presence of predicate $p$ in the pre- and post-states of an action triplet $\langle s_i, a, s_{i+1}\rangle$ (discussed in Sec.\ref{sec:affected_pals}). The cells represent whether a value for \emph{pa-tuple} is consistent with an action triplet in observation traces.}
	\end{table}

To identify possible values for a \emph{pa-tuple} $\langle p,a\rangle$, we first collect a set of all the action-triplets from $\mathbb{O}$ that contain the action $a$. For a given predicate $p$ and state $s$, if $s \models p$ then the presence of predicate $p$ is represented as $\emph{pos}$, similarly, if $s \models \neg p$ then the presence of predicate $p$ is represented as $\emph{neg}$. Using this representation, a tuple of predicate presence $\in \{\emph{(pos,pos)}$, $\emph{(pos,neg)}$, $\emph{(neg,pos)}$, $\emph{(neg,neg)} \}$ is determined for the \emph{pa-tuple} $\langle p,a\rangle$ for each action triplet $\langle s,a,s'\rangle \in \mathbb{O}$ by analyzing the presence of predicate $p$ in the pre- and post-states of the action triplets. Possible values of the \emph{pa-tuple} that are consistent with $\mathbb{O}$ are directly inferred from the Tab.~\ref{tab:t2} using the inferred tuples of predicate presence.
E.g., for a \emph{pa-tuple}, the values $\langle +, - \rangle$ and $\langle \emptyset, - \rangle$ are consistent with $(pos, neg)$, whereas, only $\langle \emptyset, + \rangle$ is consistent with $(pos, pos)$ and $(neg, pos)$ tuples of predicate presence that are inferred from $\mathbb{O}$. 

Once all the possible values for each \emph{pa-tuple} in $\md$ are inferred, we identify \emph{pa-tuples} whose previously known value in $\mi$ is no longer possible due to inconsistency with $\mathbb{O}$. The \emph{pal-tuples} corresponding to such \emph{pa-tuples} are added to the set of potentially affected \emph{pal-tuples} $\Gamma_\delta$.
Our method also infers the correct modes of a subset of \emph{pal-tuples}. E.g., consider a predicate $p$ and two actions triplets in $\mathbb{O}$ of the form $\langle s_1,a,s_1'\rangle$ and $\langle s_2,a,s_2'\rangle$ that satisfy $s_1 \models p$ and $s_2 \models \neg p$. Such an observation clearly indicates that $p$ is not in the precondition of action $a$, i.e., mode for $\langle p,a\rangle$ in the precondition is $\emptyset$. Such inferences of modes are used to update the known functionality of the agent. We remove such \emph{pal-tuples}, whose modes are already inferred, from $\Gamma_\delta$.

A shortcoming of direct inference from successful executions in available observation traces is that it cannot learn any reduction in the functionality of the agent, as discussed in the beginning of Sec. \ref{sec:approach}. We now discuss our method to address this limitation and identify a larger set of potentially affected \emph{pal-tuples}.

\subsubsection{Reduced functionality} 
We conceptualize reduction in functionality as an increase in the optimal cost of going from one state to another. More precisely, reduction in functionality represents situations where there exist states $s_i, s_j$ such that the minimum cost of going from  $s_i$ to $s_j$ is higher in $M^{\ag}_\emph{drift}$ than in $\mi$. In this paper, this cost refers to the number of steps between the pair of states as we consider unit action costs. This notion encompasses situations with reductions in reachability as a special case.
In practice, a reduction in functionality may occur if the precondition of at least one action in $M^{\ag}_\emph{drift}$ has new \emph{pal-tuples}, or the effect of at least one of its actions has new \emph{pal-tuples} that conflict with other actions required for reaching certain states. 

Our notion of reduced functionality captures all the variants of reduction in functionality. However, for clarity, we illustrate an example that focuses on situations where precondition of an action has increased. Consider the case from Tab.~\ref{tab:turn_to} where $\ag$'s model gets updated from $\mi$ to $M^{\ag}_\emph{drift}$. The action $\emph{sample\_rock}$'s applicability in $M^{\ag}_\emph{drift}$ has reduced from that in $\mi$ as $\ag$ can no longer sample rocks in situations where the battery is half charged but needs a fully charged battery to be able to execute the action. In such scenarios, instead of relying on observation traces, our method identifies traces containing indications of actions that were affected either in their precondition or effect, discovers additional salient \emph{pal-tuples} that were potentially affected, and adds them to the set of potentially affected \emph{pal-tuples} $\Gamma_{\delta}$.

To find \emph{pal-tuples} corresponding to
reduced functionality of the agent, we place the agent in an optimal planning mode and assume limited availability of observation traces $\mathbb{O}$ in the form of optimal unit-cost state-action trajectories $\langle s_0, a_1, s_1, a_2, \dots,
s_{n-1}, a_n, s_n \rangle$. 
We generate optimal plans using $\mi$ for all pairs of states in $\mathbb{O}$.
We hypothesize that, if for a pair of states, the plan generated using
$\mi$ is shorter than the plan observed in $\mathbb{O}$, then some functionality of the agent has reduced.

Our method performs comparative analysis of optimality of the observation traces against the optimal solutions generated using $\mi$ for same pairs of initial and final states. To begin with, we extract all the continuous state sub-sequences from $\mathbb{O}$ of the form $\langle s_0, s_1, \dots, s_n \rangle$ denoted by $\mathbb{O}_{\emph{drift}}$  as they are all optimal. We then generate a set of planning problems $\mathcal{P}$ using the initial and final states of trajectories in $\mathbb{O}_\emph{drift}$. Then, we provide the problems in $\mathcal{P}$ to $\mi$ to get a set of optimal trajectories $\mathbb{O}_\emph{init}$. 
We select all the pairs of optimal trajectories of the form $\langle o_{\emph{init}}, o_{\emph{drift}}\rangle$ for further analysis such that
the length of $o_{\emph{init}} \in \mathbb{O}_{\emph{init}}$ for a problem is shorter than the length of $o_{\emph{drift}} \in \mathbb{O}_{\emph{drift}}$ for the same problem. For all such pairs of optimal trajectories, a subset of actions in each $o_\emph{init} \in \mathbb{O}_\emph{init}$ were likely affected due to the model drift. We focus on identifying the first action in each $o_\emph{init} \in \mathbb{O}_\emph{init}$ that was definitely affected.

To identify the affected actions, we traverse each pair of optimal trajectories $\langle o_{\emph{init}}, o_{\emph{drift}}\rangle$ simultaneously starting from the initial states. We add all the \emph{pal-tuples} corresponding to the first differing action in $o_\emph{init}$ to $\Gamma_{\delta}$. We do this because there are only two possible explanations for why the action differs: (i) either the action in $o_\emph{init}$ was applicable in a state using $\mi$ but has become inapplicable in the same state in $\md$, or (ii) it can no longer achieve the same effects in $\md$ as $\mi$. We also discover the first actions that are applicable in the same states in both the trajectories but result in different states. The effect of such actions has certainly changed in $\md$. We add all the \emph{pal-tuples} corresponding to such actions to $\Gamma_{\delta}$.
In the next section, we describe our approach to infer the correct modes of \emph{pal-tuples} in $\Gamma_{\delta}$.

\subsection{Investigating Affected pal-tuples}
\label{sec:dmaa}

This section explains how the correct modes
of \emph{pal-tuples} in $\Gamma_\delta$ are inferred (line 2 onwards of Alg.1).
Alg.~\ref{alg:alg3} creates an abstract model in which all
the \emph{pal-tuples} that are predicted to have been affected are
set to $\unkwn$ (line 2). It then iterates over all
\emph{pal-tuples} with mode $\unkwn$ (line 4).

\subsubsection{Removing inconsistent models}
Our method generates candidate abstract models and then removes the
abstract models that are not consistent with the agent (lines 7-18
of Alg.~\ref{alg:alg3}).
For each \emph{pal-tuple} $\gamma \in \Gamma$, the algorithm 
computes a set of possible abstract models  $\mathcal{M}_\emph{abs}$
by assigning the three mode variants $+$, $-$, and $\abs$ to the 
current \emph{pal-tuple} $\gamma$ in model $M_\emph{abs}$ (line 6).
Only one model in $\mathcal{M}_\emph{abs}$ corresponds to the
agent's updated functionality. 

If the action $\gamma_a$ in the \emph{pal-tuple} $\gamma$ is present in the set of action triplets generated using
$\mathbb{O}$, then the pre-state of that action $s_{\emph{pre}}$
is used to create a state $s_I$ (lines 9-10). $s_I$ is created by removing the
literals corresponding to predicate $\gamma_p$ from $s_\emph{pre}$.
We then create a query $Q\! = \!\tuple{s_I, \tuple{\gamma_a}}$ (line~10), and pose
it to the agent $\ag$ (line 11).
The three models are then sieved based on the comparison of their
responses to the query $Q$ with that of $\ag$'s response $\theta$ to $Q$ (line 12). We use
the same mechanism as AIA for sieving the abstract models.

If the action corresponding to the current \emph{pal-tuple} $\gamma$ being
considered is not present in any of the observed action triplets,
then for every pair of abstract models in $\mathcal{M}_\emph{abs}$ (line 14),
we generate a query $Q$ using a planning problem (line 15).
We then pose the query $Q$ to the agent (line 16) and receive its response $\theta$.
We then sieve the abstract models by asking them the same query and 
discarding the models whose responses are not consistent with
that of the agent (line 17).
The planning problem that is used to generate the query and the
method that checks for consistency of abstract models' responses
with that of the agent are used from AIA.

Finally, all the models that are not consistent
with the agent's updated functionality are removed from the 
possible set of models $\m_\emph{abs}$. The remaining models are returned by the algorithm.
Empirically, we find that only one model is always returned by the algorithm.

\subsection{Correctness}
We now show that the learned drifted model representing the agent's updated functionality is consistent as defined in Def.~8 and Def.~9.
The proof of the theorem is available in the extended version of the paper~\cite{daaisy_extended}.

\begin{theorem}
Given a set of observation traces $\mathbb{O}$ generated by the 
  drifted agent $\ad$,
   a set of queries $Q$ posed to $\ad$ by Alg.~1, and the model $\mi$ representing the agent's functionality prior to the drift, each of the models $M = \langle P, A\rangle$ in $\mdset$ learned by Alg.~1 are \emph{consistent} with respect to all the observation traces $o \in \mathbb{O}$ and query-responses $\tuple{q,\theta}$ for all the queries $q \in Q$.
\end{theorem}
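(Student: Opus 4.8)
The plan is to argue consistency separately for the two sources of constraints that Alg.~1 respects---the observation traces $\mathbb{O}$ and the query-responses $\langle q,\theta\rangle$---and to show that the model returned is never pruned on account of either. The key observation is that the true drifted model $\md$ is, by definition, consistent with every $o\in\mathbb{O}$ (since the traces were generated by $\ad$) and with every query-response $\langle q,\theta\rangle$ (since $\theta$ is $\ad$'s own response). So a cleaner target is: (i) every model $M\in\mdset$ returned by the algorithm agrees with $\md$ on all \emph{pal-tuples} that the algorithm ever fixed, and (ii) on the \emph{pal-tuples} it never touched, $M$ inherits the mode of $\mi$, which was already consistent with $\mathbb{O}$ because those \emph{pal-tuples} were deliberately \emph{not} placed in $\Gamma_\delta$.

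First I would set up the invariant maintained across the outer loop (lines 4--22): after processing the first $t$ \emph{pal-tuples} of $\Gamma_\delta$, every model in $\mathcal{M}^\ag_\emph{drift}$ is consistent with all of $\mathbb{O}$ and with every query-response generated so far. The base case ($t=0$) is $M_\emph{abs}$ from line 2, which differs from $\mi$ only by setting affected \emph{pal-tuples} to $\unkwn$; since abstraction (Def.~3) only weakens constraints, and $\mi$ itself was consistent with $\mathbb{O}$ on the untouched \emph{pal-tuples} by the construction of $\Gamma_\delta$ in Sec.~4.1, consistency holds (an abstract model is trivially consistent in the sense that $\unkwn$ modes impose no obligation---here I would lean on the reading of Def.~8/Def.~9 in which only assigned modes are checked). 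For the inductive step I would argue that \texttt{sieve\_models} (imported from AIA) removes exactly those abstract models whose predicted response to $Q$ disagrees with $\ad$'s response $\theta$; hence any surviving model is consistent with $\langle Q,\theta\rangle$, and since sieving only deletes models, consistency with the previously accumulated traces and query-responses is preserved. When $\gamma_a$ does appear in $\mathbb{O}$ (lines 8--12), I would additionally check that the query $Q=\langle s_I,\langle\gamma_a\rangle\rangle$ built at line 10---with the literals of $\gamma_p$ stripped from the observed pre-state---still lets the algorithm recover the correct mode of $\gamma$ in both \emph{pre} and \emph{eff}; this is the step where I would invoke the AIA query semantics (Def.~6) to show the three candidates $\gamma^+,\gamma^-,\gamma^\emptyset$ are distinguished.

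Second, I would close the loop: when line 23 assigns $\mathcal{M}^\ag_\emph{drift}\gets\mathcal{M}_\emph{abs}$ and the outer loop terminates, every \emph{pal-tuple} of $\Gamma_\delta$ has been assigned a concrete mode in every surviving model, so these are concrete models; the invariant then gives exactly the claim that each $M\in\mdset$ is consistent with all $o\in\mathbb{O}$ and all $\langle q,\theta\rangle$. A small side lemma is needed to handle the \emph{pal-tuples} outside $\Gamma_\delta$ whose modes Sec.~4.1 infers directly (the $\langle p,a\rangle$ example with $s_1\models p$, $s_2\models\neg p$): these are set to the value that Tab.~2 certifies as the unique one consistent with the relevant action triplets, so consistency there is immediate from the table.

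The main obstacle I anticipate is not the loop bookkeeping but pinning down the precise semantics of ``consistent'' (Def.~8, Def.~9) for \emph{abstract} models and showing that \texttt{sieve\_models}, as inherited from AIA, really is sound and complete with respect to those definitions---i.e., that it prunes a model \emph{iff} that model is genuinely inconsistent with $\langle Q,\theta\rangle$. The soundness direction (no good model pruned) is what the theorem needs and is the delicate part, because it requires that the query $Q$ actually probes only the \emph{pal-tuples} under consideration and that the response $\theta=\langle n_F,s_F\rangle$ is interpreted correctly as both a reachability fact ($s_F\not\models\emph{pre}$ of the $(n_F{+}1)$-th action) and an effect fact ($s_F$ is the exact post-state). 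I would isolate this as a lemma citing the AIA correctness results and then the rest of the proof is the straightforward induction sketched above.
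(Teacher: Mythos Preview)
Your proposal is correct in outline but takes a noticeably different route from the paper's own proof, and it is worth seeing what each buys.

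The paper does \emph{not} run an induction over the outer loop. Instead it factors the theorem into two standalone lemmas---one for consistency with $\mathbb{O}$, one for consistency with the query-responses---and proves both by appealing to a single preliminary lemma that Tab.~2 exhaustively and correctly lists, for each observed presence pattern $(\emph{pos}/\emph{neg},\emph{pos}/\emph{neg})$ of a predicate across a triplet $\langle s,a,s'\rangle$, exactly the $\langle m_{\emph{pre}},m_{\emph{eff}}\rangle$ pairs that do not conflict with it. Consistency with $\mathbb{O}$ then follows because every mode the algorithm ever commits to (whether inherited from $\mi$ outside $\Gamma_\delta$, directly inferred in Sec.~4.1, or surviving sieving) lies in the Tab.~2-sanctioned set for every relevant triplet; consistency with queries follows because the sieving criterion is again Tab.~2 applied to the action $a_{n_F}$ and the states $s_{n_F-1},s_{n_F}$ in the response. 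The whole argument is a four-case check of Tab.~2 plus two short paragraphs; there is no loop invariant, no semantics for ``consistency of an abstract model,'' and no separate appeal to AIA soundness.

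Your invariant-based decomposition is more faithful to the algorithm's control flow and would yield a tighter proof if one cared about, say, intermediate guarantees after partial execution. The cost is that you are forced to give meaning to Def.~8/Def.~9 for models containing $\unkwn$, which the paper never does and which you yourself flag as the main obstacle. You also defer the crux to an external lemma (``cite AIA correctness for \texttt{sieve\_models}''), whereas the paper keeps the argument self-contained by observing that sieving is just another application of the Tab.~2 filter. If you want your proof to be complete on its own, you should either carry out that AIA soundness argument explicitly or, more simply, adopt the paper's shortcut: show once that Tab.~2 is correct, then observe that every place the algorithm fixes a mode---including sieving---it does so only within the Tab.~2-permitted set.
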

There exists a finite set of observations that if collected will allow Alg.~\ref{alg:alg3} to achieve 100\% correctness with any amount of drift: 
this set corresponds to observations that allow line 1 of Alg.~\ref{alg:alg3} to detect a change in the functionality. This includes an action triplet in 
an observation trace hinting at increased functionality, 
or a shorter plan using the previously known model hinting at reduced functionality. Thus, models learned by DAAISy are guaranteed to be completely correct irrespective of the amount of the drift if such a finite set of observations is available.
While using queries significantly reduces the number of observations required, asymptotic guarantees subsume those of passive model learners while ensuring convergence to the true model. 

\section{Empirical Evaluation}
\label{sec:experiments}

In this section, we evaluate our approach for assessing a black-box agent to learn its
model using information about its previous model and available observations. We implemented 
the algorithm for DAAISy in 
Python\!\!~\footnote{Code available at https://github.com/AAIR-lab/DAAISy} and tested it on six
planning benchmark domains from the International
Planning Competition 
(IPC)\!~\footnote{https://www.icaps-conference.org/competitions}. 
We used the IPC domains as the
unknown drifted models and generated six initial domains at random
for each domain in our experiments. 

To assess the performance of
our approach with increasing drift, we employed two methods for generating the
initial domains: 
(a) dropping the \emph{pal-tuples} already present, and (b) adding new \emph{pal-tuples}.
For each experiment, we used both types of domain generation. We
generated different initial models by randomly changing modes of random
\emph{pal-tuples} in the IPC domains. Thus, in all our experiments an IPC
domain plays the role of ground truth $M^*_\emph{drift}$ and a randomized model is
used as $\mi$. 

We use a very small set of observation traces $\mathbb{O}$ (single observation trace containing 10 action triplets) in all the experiments
for each domain.  
To generate this set, we gave the agent a random 
problem instance from the IPC corresponding to the domain used by the agent.
The agent then used Fast Downward~\cite{Helmert06thefast} 
with LM-Cut heuristic~\cite{Helmert2009LandmarksCP} to produce an optimal solution
for the given problem.
The generated observation trace is
provided to DAAISy as input in addition to a random $\mi$
as discussed in Alg.~\ref{alg:alg3}. 
The exact same observation trace is used in all experiments of the same domain, without the knowledge of the drifted model of the agent, and
irrespective of the amount of drift. 

We measure the final accuracy of the learned model $\md$ against the ground
truth model $M^*_\emph{drift}$ using the measure of model difference
$\Delta(\md, M^*_\emph{drift})$. 
We also measure the number of queries required to learn a model with significantly high accuracy. We compare the efficiency of DAAISy (our approach) with the Agent Interrogation Algorithm (AIA) \cite{verma2021asking} as it is the most closely related querying-based system. 

All of our experiments were executed on 5.0 GHz Intel i9 CPUs with 64 GB RAM running Ubuntu 18.04. We now discuss our results in detail below.\\

\subsection{Results}
\begin{figure}[t]
\centering
\centering
\includegraphics[width=\linewidth]{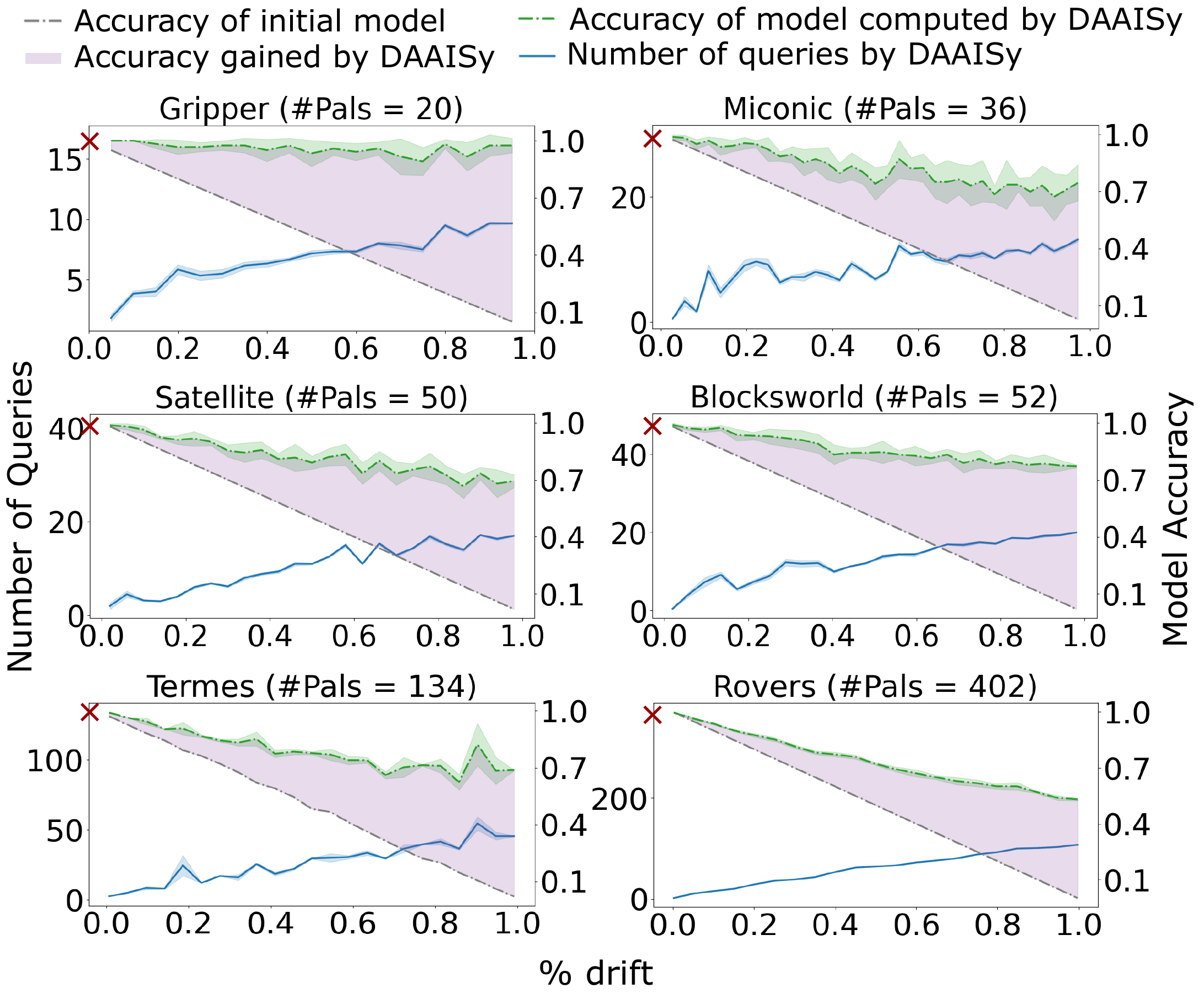}
\caption{\label{fig:result} The number of queries used by DAAISy (our approach) and AIA (marked {\textcolor{aia-x-color}{$\mytimes$}} on y-axis), as well as accuracy of model computed by DAAISy with increasing amount of drift. Amount of drift equals the ratio of drifted \emph{pal-tuples} and the total number of \emph{pal-tuples} in the domains (nPals). The number of action triplets in the observation trace used for each domain is 10. }
\end{figure}

We evaluated the performance of DAAISy along 2 directions; the number of queries it takes to learn the updated model $\md$ with increasing amount of drift, and the correctness of the model $\md$ it learns compared to $M^*_\emph{drift}$.

\subsubsection{Efficiency in number of queries} 
As seen in Fig.~\ref{fig:result},
the computational cost of assessing each agent, measured in terms of the number of queries used by DAAISy, increases as the amount of drift in the model $M^*_\emph{drift}$ increases. 
This is expected as the amount of drift is directly proportional to the number of \emph{pal-tuples} affected in the domain. This increases the number of \emph{pal-tuples} that DAAISy identifies as affected as well as the number of queries as a result.
As demonstrated in the plots, the standard deviation for number of queries remains low even when we increase the amount of drift, showing the stability of DAAISy.
	
\subsubsection{Comparison with AIA} 
Tab.~\ref{tab:t3} 
shows the average number of queries that AIA took to achieve the same level of accuracy as our approach for 50\% drifted models, and DAAISy requires significantly fewer queries to reach the same levels of accuracy compared to AIA. Fig.~\ref{fig:result} also demonstrates that DAAISy always takes fewer queries as compared to AIA to reach reasonably high levels of accuracy.

This is because AIA does not use information about the previously known model of the
agent and thus ends up querying for all possible \emph{pal-tuples}. DAAISy, on the other hand, predicts the set of \emph{pal-tuples} that might have changed based on the observations collected from the agent and thus requires significantly fewer queries.

\subsubsection{Correctness of learned model}
DAAISy computes models with at least 50\% accuracy in all six domains even when they have completely drifted from their initial model, i.e., $\Delta(\md, M^*_\emph{drift}) = $ \emph{nPals}. It attains nearly accurate models for Gripper and Blocksworld for upto 40\% drift. 
Even in scenarios where the agent’s model drift is more than 50\%, DAAISy achieves at least
70\% accuracy in five domains.
Note that DAAISy is guaranteed to find the correct mode for an identified affected \emph{pal-tuple}.
The reason for less than $100\%$ accuracy when using DAAISy is that it 
does not predict a \emph{pal-tuple} to be affected unless it encounters an observation trace conflicting with $\mi$. Thus, the learned model $\md$, even though consistent with all the observation traces, may end up being inaccurate when compared to $M^*_\emph{drift}$.

\begin{table}[t]
	\begin{tabular}{l c S[table-format=3.2] S[table-format=3.2]} 
	\toprule
	Domain & \#Pals & AIA & {DAAISy}  \\
	\midrule
	Gripper & 20 & 15.0 & 6.5 \\ 
	Miconic & 36 & 32.0 & 7.7  \\ 
	Satellite & 50 & 34.0 & 9.0 \\
	Blocksworld & 52 & 40.0 & 11.4  \\
	Termes & 134 & 115.0 & 27.0  \\
	Rovers & 402 & 316.0 & 61.0 \\
	\bottomrule
	\end{tabular}
	\centering
	\caption{The average number of queries taken by AIA to achieve the same level of accuracy as DAAISy (our approach) for 50\% drifted models.}
	\label{tab:t3}
\end{table}

\subsubsection{Discussion} AIA always ends up learning completely accurate models, but as noted above, this is because AIA queries exhaustively for all the \emph{pal-tuples} in the model. There is a clear trade-off between the number of queries that DAAISy takes to learn the model as compared to AIA and the correctness of the learned model. As evident from the results, if the model has not drifted much, DAAISy can serve as a better approach to
efficiently
learn the updated functionality of the agent with less overhead as compared to AIA. Deciding the amount of drift after which it would make sense to switch to querying the model from scratch is a useful analysis not addressed in this paper.

\section{Related Work}
\label{sec:related_work}

\subsubsection{White-box model drift} \citet{Bryce2016} address the problem of learning the updated mental model of a user using particle filtering given prior knowledge about the user's mental model. However, they assume that the entity being modeled can
tell the learning system about flaws in the learned model if needed. 
 \citet{eiter_2005_updating,eiter_2010_updating} propose a framework for updating action laws depicted in the form of graphs representing the state space. They assume that changes can only happen in effects, and that knowledge about the state space and what effects might change is available beforehand. 
Our work does not make such assumptions to learn the correct model of the agent's functionalities.

\subsubsection{Action model learning} The problem of learning agent models from observations of its
behavior is an active area of research~\cite{gil_94_learning,Yang2007,Cresswell09,Zhuo13action,arora_2018_review,aineto2019learning}. Recent work addresses active querying to learn the action model of an agent~\cite{rodrigues_2011_active,verma2021asking}.
However, these methods do not address the problem of reducing the computational cost of differential model assessment, which is crucial in non-stationary settings.

Online action model learning approaches learn the model of an agent while incorporating new observations of the agent behavior~\cite{certicky_2014_real,lamanna_2021_online2,lamanna_2021_online}. Unlike our approach, they do not handle cases where (i) the new observations are not consistent with the older ones due to changes in the agent's behavior; and/or (ii) there is reduction in functionality of the agent.
\citet{lindsay_2021_reuniting} solve the problem of learning all static predicates in a domain. They start with a correct partial model that captures the dynamic part of the model accurately and generate negative examples by assuming access to all possible positive examples. Our method is different in that it does not make such assumptions and 
leverages a small set of available observations to infer about increased and reduced functionality of an agent's model.

\subsubsection{Model reconciliation} Model reconciliation literature~\cite{chakraborti_2017_plan,sreedharan_2019_model, sreedharan_2021_foundations} 
deals with inferring the differences between the user and the agent models
and removing them using explanations. These methods consider white-box known models
whereas our approach works with black-box models of the agent. 

\section{Conclusions and Future Work}
\label{sec:conclusion}

We presented a novel method for \emph{differential assessment} of black-box AI systems to learn models of true functionality of agents that have drifted from their previously known functionality. Our approach provides guarantees of correctness w.r.t. observations. Our evaluation demonstrates that our system, DAAISy, efficiently learns a highly accurate model of agent's functionality issuing a significantly lower number of queries as opposed to relearning from scratch.
In the future, we plan to extend the framework to more general classes, stochastic settings, and models.
Analyzing and predicting switching points from selective querying in DAAISy to relearning from scratch without compromising the correctness of the learned models is also a promising direction for future work.

\section*{Acknowledgements} \label{sec:ack}
We thank anonymous reviewers for their helpful feedback on the paper.
This work was supported in part by the NSF under grants IIS 1942856,
IIS 1909370, and the ONR grant N00014-21-1-2045.

\bibliography{drift}

\cleardoublepage
\appendix
\setcounter{theorem}{0}

\section{Proofs of Theoretical Results}
To prove Thm.~1 in the paper we will first need to prove that Tab.~2 is constructed correctly. We do this by using the following result:

\begin{lemma}
Given an action triplet $\langle s, a, s'\rangle \in \mathbb{O}$ and a predicate $p \in P$, Tab.~2 correctly represents the set of values for the pair of modes $\langle m_{\emph{pre}},m_{\emph{eff}} \rangle$, where $m_{\emph{pre}}$ and $m_{\emph{eff}}$ are the modes of predicate $p$ in the precondition and effect of action $a$ respectively, that are \emph{consistent} with the action triplet.
\end{lemma}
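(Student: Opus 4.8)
The plan is to unpack the consistency relation into two independent local constraints --- one contributed by the precondition mode and one by the effect mode --- and then verify Tab.~2 by a finite case analysis over the seven admissible pa-tuple values and the four columns. First I would fix an action triplet $\langle s, a, s'\rangle \in \mathbb{O}$ (so $a(s) = s'$) and a predicate $p$, and make explicit the operative notion of consistency: a pa-tuple value $\langle m_{\emph{pre}}, m_{\emph{eff}}\rangle$ for $\langle p, a\rangle$ is consistent with this triplet iff some model assigning that value to $\langle p, a\rangle$ makes the triplet a legal transition, i.e. $s \models \emph{pre}(a)$, $s' \models \emph{eff}(a)$, and --- per the closed-world execution semantics of Sec.~2 --- every atom not occurring in $\emph{eff}(a)$ keeps its literal form between $s$ and $s'$. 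Because modes are assigned to pal-tuples independently, the modes of every pal-tuple other than $\langle p, a, \emph{pre}\rangle$ and $\langle p, a, \emph{eff}\rangle$ place no constraint on the literal form of $p$ in $s$ or $s'$; hence any completion of the remaining pal-tuples (e.g. assigning them all $\emptyset$) witnesses consistency, and consistency with the triplet reduces exactly to the conjunction of the $p$-relevant requirements.

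Next I would translate those requirements into conditions on the presence pattern. Writing the triplet's column as $(x, y)$ with $x \in \{\emph{pos}, \emph{neg}\}$ the literal form of $p$ in the pre-state $s$ and $y$ its form in the post-state $s'$, the precondition constraint reads: $m_{\emph{pre}} = +$ requires $x = \emph{pos}$; $m_{\emph{pre}} = -$ requires $x = \emph{neg}$; $m_{\emph{pre}} = \emptyset$ imposes nothing. The effect constraint reads: $m_{\emph{eff}} = +$ requires $y = \emph{pos}$; $m_{\emph{eff}} = -$ requires $y = \emph{neg}$; $m_{\emph{eff}} = \emptyset$ requires $x = y$ (the atom is untouched, so the frame rule preserves it). Feeding each of the seven values $\langle +,-\rangle, \langle +,\emptyset\rangle, \langle -,+\rangle, \langle -,\emptyset\rangle, \langle \emptyset,+\rangle, \langle \emptyset,-\rangle, \langle \emptyset,\emptyset\rangle$ (recall $\langle +,+\rangle$ and $\langle -,-\rangle$ are outside the range of pa-tuples) through both constraints yields a unique set of admissible columns per row --- e.g. $\langle +,\emptyset\rangle$ forces $x = \emph{pos}$ and $x = y$, hence only $(\emph{pos},\emph{pos})$; $\langle \emptyset,+\rangle$ forces only $y = \emph{pos}$, hence $(\emph{pos},\emph{pos})$ and $(\emph{neg},\emph{pos})$; $\langle \emptyset,\emptyset\rangle$ forces only $x = y$, hence $(\emph{pos},\emph{pos})$ and $(\emph{neg},\emph{neg})$ --- and I would then check row by row that each computed pattern coincides with the corresponding row of Tab.~2.

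The main obstacle, and the point deserving the most care, is the $m_{\emph{eff}} = \emptyset$ case: here the ``no effect on $p$'' requirement silently couples the pre- and post-state columns through the frame rule under the closed-world assumption, which is precisely why the rows $\langle +,\emptyset\rangle$, $\langle -,\emptyset\rangle$, $\langle \emptyset,\emptyset\rangle$ behave differently from their non-$\emptyset$ effect counterparts. Getting the operative definition of consistency right at the transition level (not merely ``the effect literals hold'') is what legitimizes this coupling; once that is pinned down, the remainder is a routine exhaustive verification of the $7 \times 4$ cells.
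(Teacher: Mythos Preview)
Your proposal is correct and, like the paper, proceeds by a finite case analysis over the $7\times 4$ grid; the only organisational difference is that you enumerate rows (pa-tuple values) while the paper enumerates columns (presence patterns). Your treatment is in fact tighter than the paper's: the paper's proof opens with the single rule ``if $p$ is true (false) in $s$ ($s'$) then it cannot be false (true) in the precondition (effect) of $a$,'' which by itself would still admit, e.g., $\langle +,\emptyset\rangle$ in the $(\emph{pos},\emph{neg})$ column; the paper silently relies on the frame axiom to cut those cases, whereas you make the $m_{\emph{eff}}=\emptyset \Rightarrow x=y$ constraint explicit and justify it from the execution semantics in Sec.~2. That extra care is exactly the delicate point, and your identification of it is apt.
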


\begin{proof}
Given an action triplet $\langle s, a, s^\prime \rangle$, if a predicate $p \in P$ is true (or false) in $s$ (or $s^\prime$), then it cannot be false (or true) in the precondition (or effect) of $a$. Hence, if $p$ is true in both $s$ and $s^\prime$, its value for $\langle m_{\emph{pre}},m_{\emph{eff}} \rangle$ can only be $\langle +,\abs \rangle$, $\tuple{\abs,+}$, or $\tuple{\abs,\abs}$. If $p$ is true in $s$ but false in $s^\prime$, its value for $\langle m_{\emph{pre}},m_{\emph{eff}} \rangle$ can only be $\langle +,- \rangle$ or $\tuple{\abs,-}$. If $p$ is false in $s$ but true in $s^\prime$, its value for $\langle m_{\emph{pre}},m_{\emph{eff}} \rangle$ can only be $\langle -,+ \rangle$ or $\tuple{\abs,+}$. Finally, if $p$ is false in both $s$ and $s^\prime$, its value for $\langle m_{\emph{pre}},m_{\emph{eff}} \rangle$ can only be $\langle -,\abs \rangle$, $\tuple{\abs,-}$, or $\tuple{\abs,\abs}$. Thus, for an observed action triplet $\langle s, a, s' \rangle$, Tab.~2 shows all the possible values for a $p \in P$ in the precondition and effect of $a$ that do not conflict with the presence (or absence) of $p$ in $s$ and $s'$ respectively.
\end{proof}

We now prove the two smaller results that combine to form Theorem~1.

\begin{lemma}
Given a set of observation traces $\mathbb{O}$ generated by the 
drifted agent $\ad$, each of the models $M = \langle P, A\rangle$ in $\mdset$ learned by Alg.~1 are \emph{consistent} with respect to all the observation traces $o \in \mathbb{O}$.
\end{lemma}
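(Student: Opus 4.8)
The plan is to show that the algorithm never removes a model from $\mdset$ that is consistent with $\mathbb{O}$, and that the single surviving model (or each surviving model) is in fact consistent with every $o\in\mathbb{O}$. First I would observe that Alg.~1 starts from $M_{\emph{abs}}$, obtained from $\mi$ by setting the potentially affected \emph{pal-tuples} $\Gamma_\delta$ to $\unkwn$, and then refines it by assigning concrete modes $\{+,-,\abs\}$ to each $\gamma\in\Gamma_\delta$ in turn, sieving out models inconsistent with a query-response $\tuple{Q,\theta}$. So the argument has two halves: (i) every model that the algorithm could output is obtained from some full mode assignment to $\Gamma_\delta$ extending the known part of $\mi$, and (ii) any such model that survives all sieving steps is consistent with $\mathbb{O}$.

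For half (i), I would argue that the \emph{pal-tuples} \emph{not} placed in $\Gamma_\delta$ already have correct modes with respect to $\mathbb{O}$: by Lemma~1 (the correctness of Tab.~2), for every action triplet $\tuple{s,a,s'}\in\mathbb{O}$ and predicate $p$, the value of the \emph{pa-tuple} $\tuple{p,a}$ retained from $\mi$ is one of the values Tab.~2 marks as consistent with that triplet, since Sec.~4.1 adds a \emph{pal-tuple} to $\Gamma_\delta$ precisely when its $\mi$-value is ruled out by some triplet in $\mathbb{O}$ (and additionally removes from $\Gamma_\delta$ those whose true mode is already pinned down). Hence on the unaffected part, $M$ agrees with a model already consistent with $\mathbb{O}$ by construction. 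For the affected \emph{pal-tuples}, I would show that the reduced-functionality analysis (the optimal-plan-length comparison) only ever moves \emph{pal-tuples} \emph{into} $\Gamma_\delta$ — it never commits to a mode — so it cannot introduce an inconsistency either; the modes of $\Gamma_\delta$-tuples in the output are determined solely by the sieving in lines 6--18.

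For half (ii), I would invoke the correctness of the sieving subroutine inherited from AIA: \emph{sieve\_models} removes exactly those candidate models whose simulated response to the posed query $Q$ disagrees with the agent's response $\theta$. Since $\mathbb{O}$ was generated by the true drifted agent $\ad$, the true model $M^*_{\emph{drift}}$ is consistent with every triplet in $\mathbb{O}$; and any candidate model $M$ that ends up in $\mdset$ agrees with $M^*_{\emph{drift}}$ on the unaffected \emph{pal-tuples} (half (i)). The remaining step is to argue that on the action triplets of $\mathbb{O}$ themselves, the precondition/effect contributions come only from \emph{pal-tuples} whose modes in $M$ are consistent with those triplets — those not in $\Gamma_\delta$ by Lemma~1, and those in $\Gamma_\delta$ because a candidate mode is only retained by the algorithm if it matches the agent on a query derived (in the observed-action case, lines 9--12) precisely from a pre-state in $\mathbb{O}$ with the queried predicate's literal stripped out, which forces agreement with how $p$ behaves across that triplet. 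Chaining these triplet-level consistencies gives, via Def.~8, an observation trace witnessing consistency of $M$ with each $o\in\mathbb{O}$.

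The main obstacle I expect is half (ii): carefully verifying that the AIA-style query construction in lines 9--12 (strip $\gamma_p$ and $\neg\gamma_p$ from a pre-state of $\gamma_a$ drawn from $\mathbb{O}$, then ask a one-action plan) genuinely forces the surviving candidate's mode for $\gamma$ to be one that keeps $M$ consistent with every triplet of $\mathbb{O}$ containing $\gamma_a$ — not just the particular triplet the query was built from. This requires showing that a mode consistent with the queried triplet (and with $\mi$'s retained part, which Lemma~1 ties to all of $\mathbb{O}$) is consistent with all such triplets, i.e., that Tab.~2's per-triplet constraints, intersected over $\mathbb{O}$, are exactly what the algorithm's bookkeeping in Sec.~4.1 plus one query enforces. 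Everything else is routine bookkeeping over Definitions~8, the action-triplet semantics, and the inherited correctness of \emph{sieve\_models}.
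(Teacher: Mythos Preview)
Your proposal is correct and shares the paper's core idea: reduce consistency with a trace to consistency with each action triplet, and use Lemma~1 (correctness of Tab.~2) to certify the triplet-level check. The paper's own proof is much terser: it observes that because all triplets in $\mathbb{O}$ come from the same deterministic drifted agent they cannot contradict one another, then argues that the algorithm only ever retains modes for each $\langle p,a\rangle$ that lie in the Tab.~2-consistent set for every triplet (taking an intersection over multiple triplets), and concludes via Lemma~1.

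Where you differ is in scope and structure. The paper's proof treats the matter almost entirely at the level of the passive Tab.~2 analysis in Sec.~4.1 and does not explicitly revisit the sieving/querying phase of lines~6--18; your two-half decomposition (unaffected \emph{pal-tuples} via Tab.~2, affected ones via query-based sieving) is a more careful accounting of what the algorithm actually does. The ``main obstacle'' you flag --- whether a single query built from one pre-state in $\mathbb{O}$ suffices to enforce consistency with \emph{all} triplets containing $\gamma_a$ --- is exactly what the paper dispatches with its one-line intersection remark (multiple triplets for the same action just intersect the admissible mode sets, and since the agent is deterministic that intersection is nonempty and contains the true mode). Your treatment makes that step explicit rather than asserting it; the paper's buys brevity at the cost of leaving the querying phase implicit.
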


\begin{proof}
Given that the action triplets in the set of observations $\mathbb{O}$ are generated using the same functionality of the deterministic agent after the drift $\ag_{\emph{drift}}$ (i.e., all the observations correspond to the same drifted model), any two different action triplets in $\mathbb{O}$ containing groundings
of the same action $a_i$ must have pre- and post-states that do not contradict each other.
Now, for any action triplet $\tuple{s_{i-1}, a_i, s_i}$ that is part of an observation trace $o \in \mathbb{O}$, 
when we consider the correct values for $\tuple{m_{\emph{pre}},m_{\emph{eff}}}$ for a \emph{pa-tuple} $\tuple{p,a_i}$
such that $p \in P$,
we only consider the values for $\tuple{m_{\emph{pre}},m_{\emph{eff}}}$ that are shown in Tab.~2. For multiple actions triplets, possible values for $\tuple{m_{\emph{pre}},m_{\emph{eff}}}$ can be found by taking an intersection of the sets of values for $\tuple{m_{\emph{pre}},m_{\emph{eff}}}$ for each action triplet found using Tab.~2.
Using Lemma 1, this ensures that the learned model $M$ is consistent with all the action triplets in
an observation trace $o \in \mathbb{O}$. Since an observation trace is a sequence of action triplets, the learned model $M \in \mdset$ is consistent with all the observation traces in the set of observation traces $\mathbb{O}$.
\end{proof}

\begin{lemma}
Given a set of queries $Q$ posed to $\ad$ by Alg.~1, and the model $\mi$ representing the agent's functionality prior to the drift, each of the models $M = \langle P, A\rangle$ in $\mdset$ learned by Alg.~1 are \emph{consistent} with respect to all query-responses $\tuple{q,\theta}$ for all the queries $q \in Q$.
\end{lemma}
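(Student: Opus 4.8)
The plan is to reduce the claim to a single fact --- that every model kept in $\mdset$ reproduces the drifted agent's response on every query that Alg.~1 poses --- and then read consistency off Def.~9 directly. The key step is a \emph{sieving invariant}: after the outer-loop iteration (line~4) that processes a \emph{pal-tuple} $\gamma$, every model $M$ currently in $\mathcal{M}^{\ag}_{\emph{drift}}$ answers every query $q=\langle s_I,\pi\rangle$ asked so far (lines~11 and~16) with the agent's own response $\theta=\langle n_F,s_F\rangle$, where ``$M$ answers $q$'' denotes the $\langle n_F,s_F\rangle$ obtained by simulating $\pi$ from $s_I$ using $M$'s transition function. This holds because \emph{sieve\_models} (lines~12 and~17), which we inherit from AIA~\cite{verma2021asking}, discards exactly the candidate models whose simulated response to $q$ disagrees with $\theta$; an induction over the outer loop then carries the property to the final $\mdset$, using the locality of AIA's query construction --- a query generated while investigating $\gamma$ exercises only the action $\gamma_a$ together with parts of the model that are already fixed, so a model and all of its later concretizations answer it identically. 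Since every \emph{pal-tuple} in $\Gamma_\delta$ receives a concrete mode by termination and all other \emph{pal-tuples} retain their value from $\mi$, the models in $\mdset$ are concrete, so $\emph{pre}(a)$ in $M$ is well defined for each action $a$.

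Given the invariant, I would fix a model $M=\langle P,A\rangle\in\mdset$ and a query $q=\langle s_I,\pi=\langle a_1,\dots,a_N\rangle\rangle\in Q$ with agent response $\theta=\langle n_F,s_F\rangle$, and simulate $\pi$ on $M$ from $s_I$. By the transition semantics of Sec.~2 ($a$ is applicable in $s$ iff $s\models\emph{pre}(a)$ and $a(s)$ is the unique successor) together with determinism, this yields a unique maximal trace $\langle s_I,a_1,s_1,\dots,a_{n_F},s_{n_F}\rangle$ with $s_{n_F}=s_F$, by the invariant. This trace witnesses Def.~9: each $a_i$ ($i\le n_F$) is a grounding of an action of $A$ with $s_{i-1}\models\emph{pre}(a_i)$ and $s_i=a_i(s_{i-1})\models\emph{eff}(a_i)$, so $M$ is consistent with it in the sense of Def.~8; and if $n_F<N$ the simulation stopped at step $n_F+1$ precisely because $a_{n_F+1}$ is inapplicable in $s_{n_F}$, i.e.\ $s_{n_F}\not\models\emph{pre}(a_{n_F+1})$ in $M$, while if $n_F=N$ there is no $a_{n_F+1}$ and that last condition of Def.~9 is vacuous. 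Hence $M$ is consistent with $\langle q,\theta\rangle$; ranging over all $q\in Q$ and all $M\in\mdset$ proves the lemma, and combining it with Lemma~2 (consistency with $\mathbb{O}$) establishes Thm.~1.

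The step I expect to be the main obstacle is the sieving invariant. During the refinement the models still carry $\unkwn$ on the \emph{pal-tuples} of $\Gamma_\delta$ not yet processed, so one must argue that the ``response of $M$ to $q$'' is unambiguously defined for exactly the abstract models that AIA's \emph{generate\_query} is built to separate, and --- crucially --- that agreement with $\theta$ cannot be undone by a concretization performed later in the run. Making this precise means invoking the detailed correctness properties of AIA's query generation and model sieving, not merely their input/output behavior; everything after that is a routine unfolding of Def.~8 and Def.~9.
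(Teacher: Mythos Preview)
Your proposal is sound but takes a different route from the paper's own proof. The paper does not set up a sieving invariant or induct over the outer loop; instead it argues in one step: the agent's response $\theta=\langle n_F,s_F\rangle$ to $q$ induces an observation trace $\langle s_I,a_1,s_1,\ldots,a_{n_F},s_{n_F}\rangle$, and then it claims that ``pruning models based on responses of the agent follows the criteria shown in Tab.~2,'' so the only modes retained for $p$ in the precondition and effect of $a_{n_F}$ are those compatible with the presence of $p$ in $s_{n_F-1}$ and $s_{n_F}$. In other words, the paper reduces query-response consistency back to the same combinatorial table it used for observation-trace consistency, rather than reasoning about simulated responses of candidate models.

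What each approach buys: your simulation-equivalence invariant gives a clean, direct match with Def.~9 --- both the witnessing trace and the failure condition $s_{n_F}\not\models\emph{pre}(a_{n_F+1})$ drop out of ``$M$'s response equals $\theta$'' without further argument --- but you must pay for it with the locality property you correctly flag (later concretizations do not change answers to earlier queries), which leans on detailed properties of AIA's \emph{generate\_query}. The paper's Tab.~2 argument avoids that induction by working mode-by-mode, but it is correspondingly less explicit about why the full Def.~9 condition holds, and it tacitly assumes that AIA's \emph{sieve\_models} is equivalent to applying the Tab.~2 constraints. Your self-diagnosis of where the real work lies is accurate; neither argument fully discharges the abstract-model subtlety, but yours isolates it more sharply.
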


\begin{proof}
The agent responds to the query $q = \langle s_I, \pi= \langle a_1,\dots a_n \rangle \rangle$ using the drifted model with a response $\theta =\langle n_F, s_F\rangle$. This response can only be generated if there exists an observation trace $\langle s_I, a_1, s_1, \ldots, a_{n_F}, s_{n_F} \rangle$ of length $n_F$ that can take the agent starting from state $s_I$ to the state $s_{n_F}$. Now, pruning models based on responses of the agent follows
the criteria shown in Tab.~\ref{tab:t2}. Hence, the only modes we
consider for $p$ in the precondition and effect of $a_{n_F}$ are the ones that do not
conflict with the presence (or absence) of $p$ in $s_{n_{F} -1}$ and $s_{n_F}$ respectively. The modes for any predicate in other actions are not fixed using responses to queries. Hence, the learned model $M \in \mdset$ is consistent with all the query-responses $\tuple{q,\theta}$ for all the queries $q \in Q$.
\end{proof}

\setcounter{theorem}{0}
\begin{theorem}
Given a set of observation traces $\mathbb{O}$ generated by the 
  drifted agent $\ad$,
  a set of queries $Q$ posed to $\ad$ by Alg.~1, and the model $\mi$ representing the agent's functionality prior to the drift, each of the models $M = \langle P, A\rangle$ in $\mdset$ learned by Alg.~1 are \emph{consistent} with respect to all the observation traces $o \in \mathbb{O}$ and query-responses $\tuple{q,\theta}$ for all the queries $q \in Q$.
\end{theorem}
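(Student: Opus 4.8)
The plan is to split the statement into the two independent consistency requirements of Def.~8 and Def.~9, prove each separately, and then observe that the combination is immediate because Alg.~1 only ever \emph{removes} models that fail a check and never re-assigns a mode in a way that contradicts data already accounted for. As a supporting step I would first establish that Tab.~2 is correct, i.e.\ for a single action triplet $\tuple{s,a,s'}$ and a predicate $p$, the set of mode pairs $\tuple{m_{\emph{pre}},m_{\emph{eff}}}$ for the \emph{pa-tuple} $\tuple{p,a}$ consistent with that triplet is exactly the one tabulated. This is a four-way case analysis on the truth value of $p$ in $s$ and in $s'$: under the closed-world assumption and STRIPS semantics, $p$ true (resp.\ false) in $s$ rules out $m_{\emph{pre}}=-$ (resp.\ $+$), and $p$ true (resp.\ false) in $s'$ rules out $m_{\emph{eff}}=-$ (resp.\ $+$); the surviving combinations are precisely the checkmarked cells.

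For consistency with $\mathbb{O}$, I would argue that Alg.~1 constrains the modes of each \emph{pa-tuple} $\tuple{p,a_i}$ only by intersecting, over all observed triplets containing a grounding of $a_i$, the Tab.~2 constraint sets (plus derived deductions of the same kind, e.g.\ inferring $m_{\emph{pre}}=\emptyset$ from two pre-states disagreeing on $p$). Since all of $\mathbb{O}$ is produced by one deterministic drifted agent, any two such triplets have pre- and post-states that do not contradict each other, so the true model $\md$ lies in every one of these constraint sets and the intersection is nonempty. Hence every surviving model $M$ satisfies $s_{i-1}\models\emph{pre}(a_i)$ and $s_i\models l$ for each $l\in\emph{eff}(a_i)$ along every trace, and since an observation trace is just a sequence of action triplets, $M$ is consistent with each $o\in\mathbb{O}$ in the sense of Def.~8.

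For consistency with query-responses, I would invoke Def.~6: a response $\theta=\tuple{n_F,s_F}$ to $Q=\tuple{s_I,\pi}$ witnesses an actual execution trace $\tuple{s_I,a_1,s_1,\dots,a_{n_F},s_{n_F}}$ of the drifted agent, so the sieving step — which again applies the Tab.~2 criteria to the triplets of this trace — retains exactly the models that reproduce it, and additionally retains only models under which $s_{n_F}\not\models\emph{pre}(a_{n_F+1})$ when $n_F<n$. That is precisely Def.~9. Finally, to combine the two: both the restrictions coming from $\mathbb{O}$ and those coming from $Q$ act on pal-tuple modes by taking further intersections of constraint sets that $\md$ itself satisfies, so $\mdset$ is nonempty and every $M\in\mdset$ is simultaneously consistent with all $o\in\mathbb{O}$ and all query-responses, which is the theorem.

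I expect the main obstacle to be the bookkeeping in the $\mathbb{O}$-consistency step: one must check that the intersection of the per-triplet Tab.~2 constraints, interleaved with the extra mode deductions the algorithm performs, never collapses to the empty set, and that the sieving criterion used in Alg.~1 really coincides with the Tab.~2 characterization for the relevant triplet rather than being a stricter or looser test. The clean way around this is to treat the ground-truth drifted model as a fixed point satisfying every constraint the algorithm ever imposes; then all intersections are nonempty by construction and the argument reduces to verifying that Alg.~1 imposes only constraints of this Tab.~2 form and nothing extraneous.
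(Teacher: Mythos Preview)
Your proposal is correct and mirrors the paper's own proof almost exactly: the paper also first verifies Tab.~2 by the same four-way case split (its Lemma~1), then proves observation-trace consistency by intersecting the Tab.~2 constraint sets over all triplets (its Lemma~2), proves query-response consistency by the same Tab.~2 sieving argument applied to the witnessed execution prefix (its Lemma~3), and concludes the theorem as the conjunction. Your additional care in using $\md$ as a witness to non-emptiness and in explicitly checking the $s_{n_F}\not\models\emph{pre}(a_{n_F+1})$ clause is a bit more than the paper spells out, but the overall decomposition and argument are the same.
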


\begin{proof}
This theorem is conjunction of Lemma 1 and Lemma 2. Since both of the lemmas are proven to be true, this theorem is also true.
\end{proof}

\end{document}